\documentclass[oneside,english]{amsart}
\usepackage[T1]{fontenc}
\usepackage[latin9]{inputenc}
\usepackage{float}
\usepackage{wrapfig}
\usepackage{amsthm}
\usepackage{amssymb}
\usepackage{graphicx}

\makeatletter

\floatstyle{ruled}
\newfloat{algorithm}{tbp}{loa}
\providecommand{\algorithmname}{Algorithm}
\floatname{algorithm}{\protect\algorithmname}

\numberwithin{equation}{section}
\numberwithin{figure}{section}
\theoremstyle{plain}
\newtheorem{thm}{\protect\theoremname}
  \theoremstyle{definition}
  \newtheorem{problem}[thm]{\protect\problemname}
  \theoremstyle{definition}
  \newtheorem{example}[thm]{\protect\examplename}
  \theoremstyle{plain}
  \newtheorem{prop}[thm]{\protect\propositionname}
  \theoremstyle{remark}
  \newtheorem{rem}[thm]{\protect\remarkname}
  \theoremstyle{plain}
  \newtheorem{cor}[thm]{\protect\corollaryname}
  \theoremstyle{definition}
  \newtheorem{defn}[thm]{\protect\definitionname}

\@ifundefined{showcaptionsetup}{}{%
 \PassOptionsToPackage{caption=false}{subfig}}
\usepackage{subfig}
\makeatother

\usepackage{babel}
  \providecommand{\corollaryname}{Corollary}
  \providecommand{\definitionname}{Definition}
  \providecommand{\examplename}{Example}
  \providecommand{\problemname}{Problem}
  \providecommand{\propositionname}{Proposition}
  \providecommand{\remarkname}{Remark}
\providecommand{\theoremname}{Theorem}

\begin{document}

\title{How to sample if you must: On Optimal Functional Sampling}

\maketitle
$\;$

\author{Assaf Hallak}

\author{Shie Mannor}
\begin{abstract}
We examine a fundamental problem that models various active sampling
setups, such as network tomography. We analyze sampling of a multivariate
normal distribution with an unknown expectation that needs to be estimated:
in our setup it is possible to sample the distribution from a given
set of linear functionals, and the difficulty addressed is how to
optimally select the combinations to achieve low estimation error.
Although this problem is in the heart of the field of optimal design,
no efficient solutions for the case with many functionals exist. We
present some bounds and an efficient sub-optimal solution for this
problem for more structured sets such as binary functionals that are
induced by graph walks. 
\end{abstract}

\keywords{Keywords: Learning Theory, Other Applications.}

\section{Introduction}

Consider a network in which each link has a delay characterized with
some parametric distribution. The network can be probed in order to
find an estimator for these parameters, yet the only measurement obtained
for each probe is the sum of delays along the path. As each probe
costs time, efficiently sampling the network is crucial for estimating
the delays accurately. This example is one of many that can be modeled
by the generative model studied in this paper:
\begin{problem}
Define the following system: $N$ is the number of variables, $x_{t}\in\mathbb{R}^{N}$
is the sample at stage $t$, $y_{t}$ is the measurement produced
in the following manner:
\[
y_{t}=w_{t}^{\top}x_{t},\quad w_{t}\sim N\left(\mu,diag\left\{ \sigma_{i}^{2}\right\} _{i=1}^{N}\right),
\]

where $\sigma_{i}^{2}$ are known. At each stage one may choose $x_{t}$
from a certain subset $X\subseteq\mathbb{R}^{N}$, observe $y_{t}$
and then find an estimator $\hat{\mu}$ using the history of the samples.
The problem is how to choose $x_{t}$ such that the estimator will
have as low error as possible.
\end{problem}
To better understand the problem, we revisit the network tomography
problem (\cite{vardi1996network,coates2002internet}):
\begin{example}
Observe the following network:
\end{example}
Assume that moving through each link in the network results in a random
delay $w\left(e_{i}\right)\sim N\left(\mu_{i},1\right)$. The possible
traces one can probe must start and end in a computer, so only the
traces $C_{1}\rightarrow H\rightarrow C_{2}$,$C_{1}\rightarrow H\rightarrow C_{3}$,
$C_{2}\rightarrow H\rightarrow C_{3}$ and the reverse traces are
available. These can yield the following samples $w\left(e_{1}\right)+w\left(e_{2}\right),w\left(e_{1}\right)+w\left(e_{3}\right),w\left(e_{2}\right)+w\left(e_{3}\right)$.
After sampling once from each trace, it is possible to estimate $\mu_{1},\mu_{2}$
and $\mu_{3}$ with finite expected error. Drawing more samples will
yield an estimator with lower error, but how should one draw them?
Assume for instance $C_{1}\rightarrow H\rightarrow C_{2}$ is sampled
more frequently than the other traces. This may result in a lousy
estimator for $\mu_{3}$ since this probe does not include $e_{3}$.
If the error in each estimator is equivalently important to us, the
optimal policy in this case is not surprisingly probing the network
uniformly over the available traces. However, generally uniform sampling
can generate terrible results.

The paper consists of the following parts: in the next section we
survey previous related works in several fields such as experiment
design, learning theory and network tomography. In Section $3$ we
formulate the problem and discuss its known solution and mathematical
properties. In the succeeding sections, special structured functionals
sets will be considered: initially the general binary case, and afterward
sets generated by graph walks. In graph walks we discuss two setups:
in the first, the random variables are associated with the nodes,
and in the second setup, they are associated with the edges in the
graph (like in Example $2$). In Section $7$ we point\textbf{ }out
the relation to recent works on a specific bandit setup.\textbf{ }The
final chapter will present conclusions, as well as suggestions, for
the ongoing research.

\section{Previous work}

Similar generative models as posed in Problem $1$ have been widely
studied in the field of optimal design (see Pukelsheim \cite{pukelsheim2006optimal}
for an overview of the field). However, as the size of the finite
set $X$ grows, common solutions such as SDP solvers and gradient
techniques are insufficient as their complexity depends on the set
size. This difficulty is recognized in network tomography where each
functional is identified with a trajectory on the graph so that the
size of $X$ can be exponential in the number of variables. Our work
suggests an efficient solution for this particular case.

In machine learning the field of active learning is concerned with
similar problems (for a survey see \cite{settles2010active}). Problem
$1$ highly resembles the multiple linear regression model \cite{montgomery2007introduction},
however unlike regression our work is not focused on estimating the
parameters but rather on choosing samples that will result in a better
estimator. For example, Cohn et al. \cite{cohn1996active} have studied
optimal active learning in various models, including the kernelized
weighted least squares setup. Despite the similarities between the
problems, several key differences in the setup had led to entirely
different mathematical formulations. Our focus is on using the structure
of the set $X$ for obtaining an efficient sampling strategy that
minimizes the estimation error.

We observe an interesting connection to exploration in bandit problems
through the work of Dani et al. \cite{dani2008price} and its follow-up
by Cesa-Bianchi and Lugosi \cite{bianchi2009combinatorial}. They
have come across a key problem similar to ours while proving bounds
on the exploration component of the adversarial online bandit problem
with a restricted linear sampling set. Although these works took great
interest in assessing a similar value function to the one we later
present, their work did not address the optimal sampling issue addressed
by us, nor the computational effort in finding it. As an application
of our work, we solve a specific example mentioned in \cite{bianchi2009combinatorial}.

The main application presented here concerns ``Network Tomography''
(coined by Vardi \cite{vardi1996network}), which deals with inference
on the parameters or topology of a network through probing (for an
overview see Coates et al. \cite{coates2002internet}). In this field
there are many interesting setups, for example finding a network's
structure or some of its unique properties \cite{coates2002maximum,rabbat2004multiple,lawrence2007statistical}.
For instance, in a recent work, Thouin et al. suggested using active
learning in order to infer the bandwidth of a network (\cite{thouin2011large}).
Another related work on parameter estimation was done by Tsang \cite{tsang2005optimal}
who addressed the same problem with different parameters and stresses.
However, most of the works in the field have dealt with more complicated
distributions in different schemes and most of the efforts were put
into finding efficient computation of fine estimators \cite{lawrence2003maximum,coates2001network,shih2001unicast}
rather than on how to best probe the network.

\section{The unconstrained problem}

In the introduction we presented Problem $1$: how to choose $x_{t}$
in order to minimize the error of the estimators. After observing
another example we shall examine the model more closely:
\begin{example}
Define the following problem:
\[
X=\left\{ \left(1,1,3\right),\left(1,1,0\right),\left(-2,-2,5\right)\right\} ,\; w\sim N\left(\left(\mu_{1},\mu_{2},\mu_{3}\right),diag\left(1,1,2\right)\right)
\]

In this example, it is possible to sample at each time step one of
the following linear combinations: $w\left(1\right)+w\left(2\right)+3w\left(3\right),\; w\left(1\right)+w\left(2\right),\;-2w\left(2\right)-2w\left(2\right)+5w\left(3\right)$.
\end{example}
Apparently, not all entries of $\mu$ can always be estimated with
a finite error: since all possible linear combinations include some
multiplication of the expression $w\left(1\right)+w\left(2\right)$,
adding a constant to $\mu_{1}$ and subtracting it from $\mu_{2}$
will not change the probability of the measurements and therefore
is undetectable, which implies a possibly infinite estimation error
for each. While the most logical way of handling this situation here
might be defining a new variable $\tilde{w}_{1,2}=w\left(1\right)+w\left(2\right)$,
in more complex cases it is not entirely clear how new variables should
be defined. Therefore, throughout the rest of the paper, unless specified
otherwise, we shall assume this situation does not occur, however
it still must be taken into account. 

Since all the variables in this model form a normal multivariate vector,
finding the MVUE (Minimum Variance Unbiased Estimator) which is also
the MLE (Maximum Likelihood Estimator) \cite{chen2011mathematical}
$\hat{\mu}$ is straightforward. Let $\Gamma$ be the $T\times N$
matrix whose $t^{th}$ row is $x_{t}$, $\sigma_{F,t}^{2}=x_{t}^{\top}diag\left(\sigma_{i}^{2}\right)_{i=1}^{N}x_{t}$
is the variance of the $t^{th}$ functional and $\Sigma_{\Gamma}$
is the diagonal matrix $\Sigma_{\Gamma}=diag\left(\sigma_{F,t}^{2}\right)_{t=1}^{T}$.
The following proposition is taken from Pukelsheim \cite{pukelsheim2006optimal}:
\begin{prop}
The inverse Fisher information matrix which is also the MSE matrix
for the MVUE estimator is given by:
\[
MSE\left(\mu\right)\triangleq E\left(\mu-\hat{\mu}\right)\left(\mu-\hat{\mu}\right)^{\top}=M^{-1}\triangleq\left(\sum_{t=1}^{T}\frac{1}{\sigma_{F,t}^{2}}x_{t}x_{t}^{\top}\right)^{-1}=\left(\Gamma^{\top}\Sigma_{\Gamma}^{-1}\Gamma\right)^{-1}.
\]

\end{prop}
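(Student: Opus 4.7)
The plan is to derive the MSE and the inverse Fisher information separately, and observe that they coincide; the MVUE claim then follows from the Cramér--Rao bound. The starting point is the observation that, conditional on the choices $x_t$, the measurements $y_t = w_t^\top x_t$ are independent Gaussians with mean $x_t^\top \mu$ and variance $\sigma_{F,t}^2 = x_t^\top \mathrm{diag}(\sigma_i^2) x_t$. In matrix form, $y \sim N(\Gamma \mu, \Sigma_\Gamma)$ with $\Sigma_\Gamma$ diagonal. This reduces the problem to a standard weighted linear regression, where the weights $1/\sigma_{F,t}^2$ are known.

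First I would write down the log-likelihood
\[
\ell(\mu) \;=\; -\tfrac{1}{2}\sum_{t=1}^{T} \frac{(y_t - x_t^\top \mu)^2}{\sigma_{F,t}^2} + \text{const},
\]
take the gradient in $\mu$, and solve $\nabla_\mu \ell = 0$ to obtain the MLE
\[
\hat{\mu} \;=\; \bigl(\Gamma^\top \Sigma_\Gamma^{-1} \Gamma\bigr)^{-1}\Gamma^\top \Sigma_\Gamma^{-1} y,
\]
valid under the identifiability assumption that $M = \Gamma^\top \Sigma_\Gamma^{-1} \Gamma$ is invertible (exactly the nondegeneracy condition flagged in the paragraph preceding the proposition). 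Next I would compute the Fisher information directly: since $\ell$ is quadratic in $\mu$, the Hessian is deterministic and equal to $-M$, so the Fisher information is $I(\mu) = M$ and its inverse is $M^{-1}$.

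Then I would compute the MSE of $\hat{\mu}$ by plugging $y = \Gamma \mu + \varepsilon$ (with $\varepsilon \sim N(0,\Sigma_\Gamma)$) into the formula for $\hat\mu$, giving $\hat\mu - \mu = M^{-1}\Gamma^\top \Sigma_\Gamma^{-1} \varepsilon$, and evaluating
\[
E\bigl[(\hat\mu-\mu)(\hat\mu-\mu)^\top\bigr] \;=\; M^{-1}\Gamma^\top \Sigma_\Gamma^{-1}\,\Sigma_\Gamma\,\Sigma_\Gamma^{-1}\Gamma\, M^{-1} \;=\; M^{-1},
\]
where the middle $\Sigma_\Gamma$ cancels cleanly against the two factors of $\Sigma_\Gamma^{-1}$ and the remaining $\Gamma^\top\Sigma_\Gamma^{-1}\Gamma = M$ cancels one of the $M^{-1}$ factors. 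This matches the inverse Fisher information, and so by the Cramér--Rao inequality $\hat\mu$ attains the bound; combined with the unbiasedness visible from $E[\hat\mu]=\mu$, this identifies $\hat\mu$ as the MVUE and confirms both equalities in the statement.

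No real obstacle is expected; the proof is essentially bookkeeping on a weighted least-squares model. The only subtle point is justifying the inversion of $M$, which depends on the nondegeneracy assumption already discussed in the text (otherwise one must pass to a reduced parametrization as in the $\tilde w_{1,2}$ example). I would mention this as a standing assumption rather than re-derive the degenerate case.
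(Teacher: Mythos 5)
Your derivation is correct: conditioning on the chosen $x_t$, the measurements form a Gaussian linear model $y\sim N(\Gamma\mu,\Sigma_\Gamma)$, and the weighted least-squares computation of the MLE, its covariance, and the Fisher information all check out, with the invertibility of $M$ correctly flagged as the standing identifiability assumption. The paper itself gives no proof at all --- it imports the proposition from Pukelsheim --- so there is no authorial argument to compare against; your write-up is the standard textbook derivation that the citation points to, and it is a sound (indeed more self-contained) substitute for the reference.
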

As the MSE is a matrix, we would like to choose some scalar scoring
function to minimize. There are several suitable options (see Pukelsheim
\cite{pukelsheim2006optimal}), but we believe the simplest analytically
and most appropriate option is finding A-optimality, i.e. minimizing
the trace of the estimator's covariance matrix $M^{-1}$. 

Instead of solving the discrete time setup, we shall identify the
optimal decision policy as some stationary distribution on $X$ .
To ease the notation we will assume from now on that the random variables
have unit variance (i.e. $w_{t}\left(i\right)\sim N\left(\mu_{i},1\right)$).
In addition, we shall restrict $X$ to be a finite set for tractability
reasons. Denote by $\Delta_{N}$ the simplex set in $N$ variables,
i.e. $\Delta_{N}=\left\{ v\in\mathbb{R}_{+}^{N}|\mathbf{1^{\top}}v=1\right\} $.
The problem can be formulated as follows:
\begin{problem}
Find the optimal distribution $P$ on linear combinations from $X$
that achieves:
\[
P=\arg\min_{P\in\Delta_{N}}tr\left(\left[\sum_{x\in X}\frac{p\left(x\right)}{x^{\top}x}xx^{\top}\right]^{-1}\right).
\]
\end{problem}
\begin{rem}
Pukelsheim \cite{pukelsheim2006optimal} and Cesa-Bianchi and Lugosi
\cite{bianchi2009combinatorial} have formulated a different problem
for which the factor $\frac{1}{x^{\top}x}$ does not appear in each
summand. This is due to the slightly different setup: Pukelsheim had
defined that samples have the same variance for each functional, while
in our setup it is constant per coordinate, but functional dependent. 
\end{rem}
To simplify notation from now on, we abuse our previous notation by
redefining $\Gamma$ as the $\left|X\right|\times N$ matrix whose
rows are the distinct $x\in X$. Moreover, we define the matrices
$L=diag\left(x^{\top}x\right)_{x\in X}$ and $P=diag\left(p\left(x\right)\right)_{x\in X}$
so $M=\Gamma^{\top}L^{-1}P\Gamma$, and we want to minimize $tr\left(M^{-1}\right)$.
Evidently Problem $5$ can be fitted in a standard form as seen in
\cite{boyd2004convex}.
\begin{cor}
$Tr\left(M^{-1}\right)$ is a convex function of $p\left(x\right)$
and Problem $5$ can be solved using SDP (Semi-Definite Programming).\end{cor}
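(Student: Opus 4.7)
The plan is to handle the two claims separately. First, observe from the definition that
\begin{equation*}
M(p) = \sum_{x \in X} \frac{p(x)}{x^\top x} xx^\top
\end{equation*}
is an \emph{affine} function of the vector $p = (p(x))_{x \in X}$. Since precomposition by an affine map preserves convexity, it suffices to show that $A \mapsto \mathrm{Tr}(A^{-1})$ is convex on the cone of symmetric positive definite matrices. I would give this as a short lemma: either cite the standard result in Boyd--Vandenberghe, or decompose $\mathrm{Tr}(A^{-1}) = \sum_{i=1}^N e_i^\top A^{-1} e_i$ and invoke joint convexity of the matrix-fractional function $(y,A) \mapsto y^\top A^{-1} y$; alternatively, compute the second derivative $\tfrac{d^2}{dt^2}\mathrm{Tr}\bigl((A+tH)^{-1}\bigr) = 2\,\mathrm{Tr}(A^{-1} H A^{-1} H A^{-1})$ and verify non-negativity via the substitution $B = A^{-1/2} H A^{-1/2}$, which reduces the expression to $2\,\mathrm{Tr}(B A^{-1} B) \geq 0$.

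For the SDP reformulation, I would apply the Schur complement. Introduce an auxiliary symmetric matrix variable $Z$ and note that when $M(p) \succ 0$,
\begin{equation*}
\begin{pmatrix} Z & I \\ I & M(p) \end{pmatrix} \succeq 0 \;\Longleftrightarrow\; Z \succeq M(p)^{-1},
\end{equation*}
so minimizing $\mathrm{Tr}(Z)$ over such $Z$ recovers $\mathrm{Tr}(M(p)^{-1})$ exactly. Problem~5 then rewrites as
\begin{equation*}
\min_{p,\,Z}\; \mathrm{Tr}(Z) \quad \text{s.t.} \quad \begin{pmatrix} Z & I \\ I & \sum_{x \in X} \tfrac{p(x)}{x^\top x} xx^\top \end{pmatrix} \succeq 0,\;\; p \in \Delta_{|X|},
\end{equation*}
which is a standard SDP: the objective is linear in $(p,Z)$ and every constraint, including $p \geq 0$ and $\mathbf{1}^\top p = 1$, is a linear (matrix) inequality.

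The only subtlety to handle carefully is the treatment of distributions $p$ for which $M(p)$ is singular. There the inverse is undefined and $\mathrm{Tr}(M^{-1})$ should be interpreted as $+\infty$ in the extended-value sense, which is consistent with the identifiability caveat raised after Example~3 and preserves convexity. The LMI formulation disposes of this automatically, since a feasible $Z$ with finite trace can exist only when $M(p)$ is invertible. With that convention in place the two pieces assemble cleanly and the corollary follows.
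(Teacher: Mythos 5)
Your proof is correct and follows the same route the paper intends: the paper offers no written argument for this corollary beyond the remark that Problem~5 ``can be fitted in a standard form'' from Boyd and Vandenberghe, and what you have written out --- affinity of $M(p)$ in $p$, convexity of $A\mapsto\mathrm{Tr}(A^{-1})$ on the positive definite cone, and the Schur-complement epigraph reformulation with the auxiliary variable $Z$ --- is precisely that standard A-optimal design SDP. Your extra care with the singular case (extended-value convention, automatic handling by the LMI) is a welcome addition the paper glosses over.
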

\begin{rem}
There is a minor variation on Problem $5$ that can be handled similarly:
consider the same objective function, only that now each functional
$x$ is associated with a cost $c\left(x\right)$ and there is some
restricted budget $C$. Adding the linear constraint $\sum_{x\in X}c\left(x\right)p\left(x\right)\leq C$
to the formulation does not affect its solvability using SDP. 
\end{rem}
According to Corollary $7$, solving Problem $5$ can be done in polynomial
time as a function of $\left|X\right|$. However, when $X$ is very
large it is unfeasible. Nevertheless, in practice large sampling spaces
tend to contain some inner structure and this is our motivation. We
view graph walks as structured sets for which a sub-optimal yet efficient
solution is employed.

\section{Binary functionals}

Binary functionals, i.e., linear combinations with coefficients only
in $\left\{ 0,1\right\} $, are an important and interesting subset
of possible functionals, since they are sufficient to describe sampling
in special models such as graphs. The meaning of using binary functionals
is that you choose which of the $N$ elements are part of your sample.
We start with the case where a subset of size $K$ of the variables
is chosen.

\subsection{K-choose-N}

The most natural set of binary functionals is the set of all functionals
with exactly $K$ ones. For example, for $K=1$ we get $X=\left\{ e_{i}\right\} _{i=1}^{N}$
and for $K=N$ we get $X=\left\{ \mathbf{1}\right\} $ ($\mathbf{1}$
denotes the vector of $N$ ones). It turns out the optimal solution
for these sets can be found analytically, as well as the solution
for unions of K-choose-N sets for different values of K. 
\begin{defn}
Denote the K-choose-N set $B_{K}$ by $B_{K}\triangleq\left\{ x\in\left\{ 0,1\right\} ^{N}|\; x^{\top}\mathbf{1}=K\right\} $.
\end{defn}
The following theorem determines the optimal solution for Problem
$5$ when $X=B_{K}$: 
\begin{thm}
Let $X=B_{K}$. The optimal solution for Problem $5$ is choosing
uniformly functionals over $X$, and the optimal MSE is given by:
$tr\left(M^{-1}\right)=\frac{N}{K}+\frac{\left(N-1\right)^{2}N}{N-K}.$\end{thm}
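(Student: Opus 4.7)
The plan has two independent parts: showing that uniform is optimal, and then computing $\mathrm{tr}(M^{-1})$ for the uniform distribution.

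For optimality, the natural move is a \emph{symmetrization} argument. The set $B_K$ is invariant under the natural action of the symmetric group $S_N$ on coordinates: if $\pi \in S_N$ and $P_\pi$ is the associated permutation matrix, then $P_\pi B_K = B_K$, and moreover $x^\top x = K$ for every $x \in B_K$, so the factor $1/(x^\top x)$ plays no role under permutations. Hence for any feasible $p$, the shifted distribution $p^\pi(x) = p(P_\pi^{-1} x)$ yields matrix $M_{p^\pi} = P_\pi M_p P_\pi^\top$, which has the same trace of its inverse. By Corollary 7 the objective $p \mapsto \mathrm{tr}(M_p^{-1})$ is convex, so Jensen's inequality applied to $\bar p = \frac{1}{N!}\sum_\pi p^\pi$ gives a distribution whose objective value is no worse than $p$'s. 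Since $S_N$ acts transitively on $B_K$, the only $S_N$-invariant distribution on $B_K$ is the uniform one, so the uniform distribution attains the optimum.

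For the explicit value, I would compute $M$ directly under the uniform distribution. With $p(x) = 1/\binom{N}{K}$ and $x^\top x = K$,
\[
M = \frac{1}{K\binom{N}{K}} \sum_{x \in B_K} x x^\top.
\]
The matrix $S := \sum_{x \in B_K} x x^\top$ has diagonal entries $\binom{N-1}{K-1}$ (indicators $x_i = 1$) and off-diagonal entries $\binom{N-2}{K-2}$ (indicators $x_i = x_j = 1$). Thus $S = \alpha I + \beta J$ with $J = \mathbf{1}\mathbf{1}^\top$, and after dividing by $K\binom{N}{K}$ one gets $M = a I + b J$ with $a = (N-K)/(N(N-1))$ and $b = (K-1)/(N(N-1))$ (these simplify via $\binom{N-2}{K-1}/\binom{N}{K} = K(N-K)/(N(N-1))$ and similarly for $\binom{N-2}{K-2}/\binom{N}{K}$).

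The final step is the standard spectral decomposition of $aI + bJ$: the eigenvalues are $a + Nb$ on the span of $\mathbf{1}$ and $a$ on its orthogonal complement, so
\[
\mathrm{tr}(M^{-1}) = \frac{1}{a + Nb} + \frac{N-1}{a}.
\]
A short simplification gives $a + Nb = K/N$ and $(N-1)/a = N(N-1)^2/(N-K)$, reproducing the claimed formula $N/K + N(N-1)^2/(N-K)$. I expect no serious obstacle; the only place to be careful is the symmetrization step, where one must note that $B_K$ is a single $S_N$-orbit (which fails for unions of $B_K$'s with different $K$, a case the author mentions separately).
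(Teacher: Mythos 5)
Your proposal is correct and follows essentially the same route as the paper: a symmetrization-plus-convexity argument for optimality (the paper averages the information matrix $\Phi^{\top}M\Phi$ over $S_{N}$ and pins down the average via $tr(M')=1$ and $\mathbf{1}^{\top}M'\mathbf{1}=K$, whereas you average the distribution and invoke transitivity of the $S_{N}$-action on $B_{K}$ --- the same idea in an equivalent form), followed by the same counting of diagonal and off-diagonal entries and the spectral decomposition of $aI+bJ$ to get $\frac{N}{K}+\frac{N(N-1)^{2}}{N-K}$. Your version actually spells out the eigenvalue computation that the paper leaves implicit, and your closing remark that the transitivity step is what fails for unions of different $B_{K}$'s is an accurate observation.
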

\begin{proof}
First we find the trace of the uniform decision for which $M=\frac{1}{K\left(\begin{array}{c}
N\\
K
\end{array}\right)}\sum_{x\in B_{K}}xx^{\top}.$ By applying a counting argument we obtain $M_{i,i}=\frac{1}{N}$,
$M_{i,j}=\frac{K-1}{N\left(N-1\right)}$ and $tr\left(M^{-1}\right)=\frac{N}{K}+\frac{\left(N-1\right)^{2}N}{N-K}.$

Now assume $M$ is A-optimal. Due to the symmetry of $B_{K}$ for
each variable, for any permutation matrix $\Phi$ we have $\Phi^{T}M\Phi\in conv\left(B_{K}\right)$.
From the convexity of Problem $5$ we can conclude that:
\[
tr\left(\left(\frac{\sum_{\Phi\in S_{N}}\Phi^{T}M\Phi}{N!}\right)^{-1}\right)\leq\frac{\sum_{\Phi\in S_{N}}tr\left(\left(\Phi^{T}M\Phi\right)^{-1}\right)}{N!}=tr\left(M^{-1}\right).
\]

Due to symmetry the matrix $M'=\frac{1}{N!}\sum_{\Phi\in S_{N}}\Phi^{T}M\Phi$
has constant diagonal entries and constant off-diagonal entries denoted
$c_{diag},\; c_{off}$ respectively. Since $tr\left(M'\right)=1$
we know that $c_{diag}=\frac{1}{N}$. As $M'\in conv\left(B_{K}\right)$
we have $\mathbf{1}^{T}M'\mathbf{1}=Nc_{diag}+N\left(N-1\right)c_{off}=K$,
so $c_{off}=\frac{K-1}{N\left(N-1\right)}$ and $M'$ is the same
matrix obtained by uniform choice.\end{proof}
\begin{example}
for $K=N-1$ we get $tr\left(M^{-1}\right)=\frac{N}{\left(N-1\right)}+\left(N-1\right)^{2}N$,
an $N^{3}$ asymptotic behavior.
\end{example}
Since the best value of $K$ is $1$, and for $K=N-1$ we got an error
that scales like $N^{3}$. We generalize this notion that smaller
$K$ yields better results:
\begin{cor}
If $K_{1}<K_{2}$, then the optimal solution of Problem $5$ for $X_{1}=B_{K_{1}}$
is smaller and therefore better than the optimal solution of Problem
$5$ for $X_{2}=B_{K_{2}}$.\end{cor}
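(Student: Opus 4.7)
The plan is to reduce the corollary to a one-variable monotonicity statement using the closed-form expression supplied by Theorem~11. That theorem gives the optimal trace for $X=B_{K}$ explicitly as
\[
f(K)=\frac{N}{K}+\frac{(N-1)^{2}N}{N-K},
\]
so the corollary amounts to showing $f(K_{1})<f(K_{2})$ whenever $1\le K_{1}<K_{2}\le N-1$. Equivalently, I would prove that $f$ is strictly increasing as an integer function on $\{1,\dots,N-1\}$, and handle the degenerate boundary $K=N$ separately.

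The cleanest route is to verify $f(K+1)-f(K)>0$ directly. Expanding the difference, the first summand contributes $-N/(K(K+1))$ and the second contributes $(N-1)^{2}N/((N-K)(N-K-1))$, so after clearing the common positive factor $N/[K(K+1)(N-K)(N-K-1)]$ the task reduces to the single inequality
\[
(N-1)^{2}K(K+1)>(N-K)(N-K-1).
\]
For $1\le K\le N-2$ the right side is at most $(N-1)(N-2)<(N-1)^{2}$, while the left side is at least $2(N-1)^{2}$, so the inequality is loose and holds by inspection. This is essentially the whole argument; I do not anticipate a real obstacle, since the algebra is mechanical.

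A cleaner alternative I would mention is a convexity argument: both $1/K$ and $1/(N-K)$ are convex on $(0,N)$, so $f$ is convex there, and a short computation gives $f'(1)=-N+(N-1)^{2}N/(N-1)^{2}=0$. Hence $K=1$ is the unique critical point of $f$ on $(0,N)$, and by strict convexity $f$ is strictly increasing on $[1,N)$, which yields the integer monotonicity for free. The only care needed is the boundary case $K_{2}=N$, where $B_{N}=\{\mathbf{1}\}$ is rank one so $M$ is singular and the optimal value is $+\infty$; the comparison is then trivial.
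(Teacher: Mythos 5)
Your proposal is correct and follows the same route as the paper, which simply says the result is ``obtained from analysing $tr\left(M^{-1}\right)=\frac{N}{K}+\frac{\left(N-1\right)^{2}N}{N-K}$ as a function of $K$''; you have just supplied the monotonicity computation (and the $K=N$ degenerate case) that the paper leaves implicit. Both your discrete-difference argument and the convexity argument are valid ways to carry out that analysis.
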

\begin{proof}
Obtained from analysing $tr\left(M^{-1}\right)=\frac{N}{K}+\frac{\left(N-1\right)^{2}N}{N-K}$
as a function of $K$.
\end{proof}
So far we have shown the optimal solution for N-choose-K sets, and
in Corollary $12$ we also show that sets with smaller $K$ can be
used better. This result can be strengthened by the subsequent theorem
that suggests that if $X$ contains several N-choose-K subsets, only
the smallest subset is used for the optimal solution. In addition,
it gives rise to a general lower bound on binary functionals:
\begin{thm}
Assume $X=\bigcup_{i=K}^{N}B_{i}$, i.e., $X$ is the set of all linear
combinations with at least $K$ ones. The optimal solution of Problem
$5$ is given by a uniform choice over the functionals in $B_{K}$.\end{thm}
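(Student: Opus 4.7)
The plan is to generalize the symmetrization argument from the proof of Theorem 10 and reduce the entire problem to a one-dimensional optimization in the \emph{average} number of ones used by the sampled functionals.

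First, I observe that $X=\bigcup_{i=K}^{N}B_{i}$ is again invariant under permutations of the $N$ coordinates, so the same convexity trick applies: given any distribution $p$ on $X$, its symmetrization $p'=\frac{1}{N!}\sum_{\Phi\in S_{N}}p\circ\Phi^{-1}$ yields the information matrix $M'=\frac{1}{N!}\sum_{\Phi}\Phi^{\top}M\Phi$, and convexity of $\mathrm{tr}((\cdot)^{-1})$ on positive definite matrices gives $\mathrm{tr}((M')^{-1})\leq\mathrm{tr}(M^{-1})$. Hence I may restrict to permutation-invariant $p$, which are parameterised entirely by the total slice masses $q_{k}=p(B_{k})$, with $p$ uniform inside each $B_{k}$.

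Second, I reuse the computation already performed in Theorem 10: for $x\in B_{k}$ one has $x^{\top}x=k$, and $\frac{1}{|B_{k}|}\sum_{x\in B_{k}}xx^{\top}$ has diagonal entries $k/N$ and off-diagonal entries $k(k-1)/(N(N-1))$. Summing over $k$ and using $\sum_{k}q_{k}=1$, the resulting $M$ has the equicorrelated form $(c_{d}-c_{o})I+c_{o}J$ with $c_{d}=1/N$ and $c_{o}=\sum_{k}q_{k}(k-1)/(N(N-1))$. Introducing the mean weight $\bar{k}=\sum_{k}k\,q_{k}$, the eigenvalues of $M$ simplify to $(N-\bar{k})/(N(N-1))$ (multiplicity $N-1$) and $\bar{k}/N$ (multiplicity $1$), so $\mathrm{tr}(M^{-1})=N/\bar{k}+N(N-1)^{2}/(N-\bar{k})$. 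This is exactly the Theorem 10 formula with $K$ replaced by $\bar{k}$.

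Third, I analyse the one-variable function $f(\bar{k})=N/\bar{k}+N(N-1)^{2}/(N-\bar{k})$ on $[K,N)$. A short derivative check shows $f$ is convex with a unique stationary point at $\bar{k}=1$, so $f$ is strictly increasing on $[1,N)$. Since every $x$ in the support has at least $K$ ones, $\bar{k}\geq K\geq 1$, and the minimum is attained at $\bar{k}=K$. But $\bar{k}=\sum_{k\geq K}k\,q_{k}=K$ combined with $\sum q_{k}=1$ and $q_{k}\geq 0$ forces $q_{K}=1$ and $q_{k}=0$ for $k>K$, which together with permutation invariance means $p$ is uniform on $B_{K}$.

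The main obstacle is the bookkeeping in the second step --- verifying that the slice masses $q_{k}$ really aggregate into the two parameters of the equicorrelated matrix and then collapse into the single statistic $\bar{k}$. Once that reduction is in place, the rest of the argument is just the univariate monotonicity already exploited in Corollary 12.
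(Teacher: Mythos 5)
Your proposal is correct and follows essentially the same route as the paper: symmetrize over $S_N$ using convexity of $\mathrm{tr}\left((\cdot)^{-1}\right)$, reduce to an equicorrelated matrix with diagonal entries $\frac{1}{N}$, and observe that $\mathrm{tr}\left(M^{-1}\right)$ is monotone in the single remaining off-diagonal parameter, whose minimum over admissible distributions forces all mass onto $B_{K}$. The paper states this only as a terse sketch; your version supplies the missing bookkeeping (the reduction to the slice masses $q_{k}$, the collapse to $\bar{k}$, and the argument that $\bar{k}=K$ forces $q_{K}=1$), but the underlying argument is the same.
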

\begin{proof}
Like we showed in the proof of Theorem $10$, there is an optimal
matrix $M$ with constant off diagonal entries and due to its unit
trace and symmetry its diagonal entries are $\frac{1}{N}$. The smallest
off diagonal constant yields the minimal $tr\left(M^{-1}\right)$
so choosing the smallest $K$ is optimal. 
\end{proof}
\textbf{Conclusion:} For Problem $5$, if $X\subseteq\bigcup_{i=K}^{N}B_{i}$,
meaning all functionals in $X$ have at least $K$ ones, then $tr\left(M^{-1}\right)\geq\frac{N}{K}+\frac{\left(N-1\right)^{2}N}{N-K}$.

\section{Graph paths with randomness in the nodes}

Given a source-drain DAG (Directed Acyclic Graph) with $N$ inner
nodes, and assume $V=\left\{ v_{s},v_{d}\right\} \bigcup\left\{ v_{i}\right\} _{i=1}^{N}$
where the order over the nodes is defined by a topological order.
Each inner node is associated with a normally distributed random variable
$w\left(i\right)\sim N\left(\mu_{i},1\right)$, with an unknown $\mu_{i}$.
We would like to estimate the $\mu_{i}$'s with minimal MSE. This
scheme can model for example networks with delays generated from the
networking equipment in each node, but with constant or very low variance
link delays, e.g., optical networks. Denote by $x$ both the actual
path in the graph, and the corresponding characteristic vector, i.e.,
$x\left(i\right)=1$ iff $v_{i}\in x$. 
\begin{example}
Consider the following source-drain DAG: 
\end{example}
The possible paths on the graph allow us to sample the following linear
combinations: $w_{1}+w_{2}+w_{3},w_{1}+w_{2},w_{1}+w_{3},w_{2},w_{2}+w_{3}$,
so the corresponding $\Gamma$ matrix is: $\Gamma^{T}=\left[\begin{array}{ccccc}
1 & 1 & 1 & 0 & 0\\
1 & 1 & 0 & 1 & 1\\
1 & 0 & 1 & 0 & 1
\end{array}\right]$.

The following example exhibits many problems in the model we must
take into account:
\begin{example}
Consider a grid graph of the following form: 
\end{example}
This kind of graph is a good example for what can happen when ignoring
the complexity of Problem $5$ and instead uniformly choosing functionals
from the given set: if all paths on the grid are chosen the same number
of times, then the nodes in the middle will be sampled much more often
than these far on the sides, since many more paths go through them.
In their paper, Cesa-Bianci and Lugosi \cite{bianchi2009combinatorial}
have also addressed this counter example to the good results of uniform
distribution over the trajectories of many other graphs and models.
They suggested in their paper to find a better solution using semi-definite
programming which is impractical for non-trivial grids.

The number of paths in the example is large, which makes finding the
solution unfeasible for many nodes. Another concern we have neglected
so far that emerges in this example is its identifiability: in this
grid graph, adding a constant to the mean of all nodes at a certain
layer (meaning all nodes at the same distance from the source) and
subtracting the same constant from the mean of all nodes at another
layer will not change the distribution of the samples, so the set
of possible paths is unidentifiable. Apparently this is a key problem
in any layers graph, and although there are some reasonable suggestions
for dealing with this issue we shall neglect it in this paper as it
draws us further from the main scope.

Since the number of paths can be exponential in the number of nodes,
there might be too many functionals to optimally find the MSE using
SDP solvers. To cope with this setback, we offer a relaxed solution
that can be computed efficiently using dynamic programming. Simulations
show that our approach works quite well.

\subsection{The product distribution}

We propose a relaxed solution using dynamic programming by introducing
the product problem: observe only the distribution on paths generated
as the product of the leaving distribution from each node. More specifically,
denote $\alpha_{i,j}$ as the probability to leave the vertex $v_{i}$
using the edge $e_{i,j}$, so we get the following equations: $\sum_{j=i+1}^{N}\alpha_{i,j}+\alpha_{i,d}=1,\; p\left(x\right)=\prod_{i,j:e_{i,j}\in x}\alpha_{i,j}$.
Now we can define the relaxed problem which we later show is easier
to solve:
\begin{problem}
Find the optimal exit distributions $\alpha$ that solves the following
problem:
\[
\begin{cases}
\min_{\alpha} & tr\left(\left[\sum_{x\in X}\frac{p\left(x\right)}{x^{\top}x}xx^{\top}\right]^{-1}\right)\\
{\rm subject\; to} & p\left(x\right)=\prod_{i,j:e_{i,j}\in x}\alpha_{i,j},\;\alpha_{i,j}\geq0,\;\sum_{j=i+1}^{N}\alpha_{i,j}+\alpha_{i,d}=1
\end{cases}.
\]
\end{problem}
\begin{example}
Recall the graph from Example 

The value on each edge represents the exit distribution from its source
node so the distribution over paths is given by:
\[
p\left(v_{s}\rightarrow v_{1}\rightarrow v_{2}\rightarrow v_{d}\right)=0.7\cdot0.2\cdot0.5=0.07,\; p\left(v_{s}\rightarrow v_{2}\rightarrow v_{3}\rightarrow v_{d}\right)=0.3\cdot0.5\cdot1=0.15,...
\]

\end{example}
Notice that the set of product distributions is a subset of all possible
distributions over the paths, so the optimal product distribution
may produce a much worse MSE than the optimal unconstrained distribution.
However, since for optimization on the exit distributions $\alpha$
we got no more than $N^{2}$ variables, if $M$ can be expressed efficiently
using $\alpha$ then the computation effort will be drastically reduced.
In order for Problem $16$ to have an efficient solution we need to
be able to calculate the matrix $M$ without directly calculating
$p\left(x\right)$ for each $x$. In Theorem $18$ we show how it
can be done:
\begin{thm}
The matrix $M\left(\alpha\right)$ can be computed in polynomial time
using dynamic programming.\end{thm}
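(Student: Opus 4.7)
The plan is to reduce each entry of $M(\alpha)$ to a sum of polynomially many weighted path counts on the DAG, each satisfying a Bellman-style recursion along the topological order.

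First, I would expand $M_{a,b}(\alpha) = \sum_{x \in X:\ v_a, v_b \in x} p(x)/|x|$, using $x(a)\,x(b) = \mathbf{1}\{v_a, v_b \in x\}$ and $|x| = x^\top x$. The only non-multiplicative factor in the summand is the global $1/|x|$, which obstructs a direct one-pass DP. I would handle it by stratifying over path length: for each $k = 1, \ldots, N$ define $S_{a,b}(k) := \sum_{x:\ v_a, v_b \in x,\ |x|=k} p(x)$, so that $M_{a,b}(\alpha) = \sum_{k=1}^{N} S_{a,b}(k)/k$. Inside each stratum the summand is purely multiplicative in the exit parameters $\alpha_{i,j}$.

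To compute the $S_{a,b}(k)$, I would split each path through $v_a$ and $v_b$ (assume WLOG $a \leq b$ in the topological order) into a prefix $v_s \to v_a$, a middle $v_a \to v_b$, and a suffix $v_b \to v_d$. Define DP tables $F(v, k_1)$, $H(u, v, k_2)$, and $G(v, k_3)$ that sum $\prod_e \alpha_e$ over the prefixes, middles, and suffixes of the indicated inner-node counts respectively. Each satisfies a standard forward/backward recursion: $F$ and $G$ are filled in $O(N^3)$ time and $H$ in $O(N^4)$ time. Then $S_{a,b}(k)$ is recovered by the convolution $\sum_{k_1 + k_2 + k_3 = k} F(v_a, k_1)\, H(v_a, v_b, k_2)\, G(v_b, k_3)$, with a simpler two-factor analogue when $a = b$. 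Assembling all $O(N^2)$ entries yields $M(\alpha)$ in $O(N^4)$ time, which is polynomial in $N$.

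The main obstacle, as noted, is the $1/|x|$ factor, which prevents absorbing the weight into the DP transitions directly. The resolution is purely combinatorial: because $|x| \leq N$, stratifying by length costs only a polynomial blow-up, and within each stratum the problem is standard weighted path counting on a DAG. Equivalently, one may build the univariate polynomial $\sum_{x:\ v_a, v_b \in x} p(x)\, t^{|x|}$ by DP (attaching a factor $t$ to each node) and then use the identity $1/|x| = \int_0^1 t^{|x|-1}\,dt$ to recover $M_{a,b}(\alpha)$ as the integral of a polynomial, making the connection to generating functions explicit; this is the same algorithm phrased analytically.
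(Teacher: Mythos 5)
Your proof is correct and follows essentially the same route as the paper: stratify by path length to isolate the $1/|x|$ factor, compute weighted prefix/suffix path counts by forward/backward dynamic programming along the topological order (the paper's $\Phi_{i}\left(l\right)$ and $\Theta_{i}\left(l\right)$), and recover each entry of $M$ as a length-indexed convolution weighted by $1/l$. Your explicit middle-segment table $H\left(u,v,k_{2}\right)$ for the off-diagonal entries spells out a step the paper only sketches (``in a similar fashion''), but the underlying algorithm is the same.
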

\begin{proof}
First we show how one can compute how many times length $l$ paths
contain each node. We calculate for each node sequentially (according
to a topological sort) how many times paths of length $l$ from the
source finish in this node using the following equation: $\Phi_{i}\left(l\right)=\sum_{j=1}^{i-1}\alpha_{j,i}\Phi_{j}\left(l-1\right)+\alpha_{s,i}1\left\{ l=1\right\} $.
Likewise, we can calculate for each node sequentially how many length
$l$ paths started in it and finished at the drain by employing the
following equation: $\Theta_{i}\left(l\right)=\sum_{j=i+1}^{N}\alpha_{i,j}\Theta_{j}\left(l-1\right)+\alpha_{i,d}1\left\{ l=1\right\} $.
Now the number of length $l$ paths that passed through node $i$
is given by: $J_{i,i}\left(l\right)=\sum_{j=1}^{l-1}\Phi_{i}\left(j\right)\Theta_{i}\left(l-j\right)$.
In a similar fashion we can compute the number of length $l$ trajectories
that passed through both node $i$ and node $j$: $J_{i,j}\left(l\right)$.
Finally, realizing that the matrix $J\left(l\right)$ satisfies $J\left(l\right)=\sum_{x:x^{T}\mathbf{1}=l}p\left(x\right)xx^{\top}$,
we can compute $M$ as the weighted sum of the matrices $\left\{ J\left(l\right)\right\} _{l=1}^{N}$:
$M=\sum_{l=1}^{N}\frac{1}{l}J(l)$.
\end{proof}
Obviously, the matrix $M$ linearly depends on each distinct set of
leaving probabilities $\left\{ \alpha_{i,j}\right\} _{j=i+1}^{d}$
or entering probabilities $\left\{ \alpha_{i,j}\right\} _{i=s}^{j-1}$,
while not changing the other values of $\alpha$. Therefore for each
such set of variables the problem of minimizing $tr\left(M^{-1}\right)$
is an SDP in that specific set of variables. However, over the entire
set of variables $\left\{ \alpha_{i,j}\right\} $ the function is
not an SDP as it is not convex. Therefore, even if we compute for
each node iteratively the optimal exit distribution assuming all the
other distributions are constant, we cannot be assured the solution
found is globally optimal. Algorithm $1$ describes this general scheme.
Notice that the objective function decreases at each iteration so
convergence is guaranteed. The order in which the nodes are chosen
can play a role in the convergence rate; we leave that aspect for
future research. Empirically the optimal solution is unique and closely
approximates the solution of Problem $5$, as seen in Figure $5.2\left(A\right)$. 

\begin{wrapfigure}[20]{o}{0.27\columnwidth}%
\caption{The optimal product distribution for $5\times5$ grid. Notice the
weights give higher probability to go through the corners than the
uniform distribution.}

\raggedleft{}\end{wrapfigure}%

In Figure $5.2\left(B\right)$ we compare our relaxed solution against
the uniform distribution for the grid graph. Note that for square
grids with $a$ nodes on each side there are $\left(\begin{array}{c}
2a-2\\
a-1
\end{array}\right)$ paths in the graph, so the optimal solution is impossible to compute
for large $a$. To deal with the identifiability problem, instead
of $tr\left(M^{-1}\right)$ we used the objective function $\sum\limits _{\lambda_{i}\neq0}\frac{1}{\lambda_{i}}$
to minimize. The results for the product distribution are much better
than these obtained by the uniform distribution. 

\begin{flushleft}
\begin{algorithm}
\begin{raggedright}
\caption{Optimize\_Products}

\par\end{raggedright}

\begin{raggedright}
1. Start with random exit distributions for each node $\left\{ \alpha_{i,j}\right\} $.
\par\end{raggedright}

\begin{raggedright}
2. Choose node $v$.
\par\end{raggedright}

\begin{raggedright}
3. Find the optimal exit distribution $\left\{ \alpha_{v,j}\right\} _{j=v+1}^{d}$
from node $v$ to all other nodes assuming all other exit distribution
$\left\{ \alpha_{i,j}\right\} _{i\neq v}$ are constant.
\par\end{raggedright}

\raggedright{}4. Go to step $2$ with a different node.
\end{algorithm}

\par\end{flushleft}

\begin{figure}
\caption{Matlab Simulations. For the implementation we used the CVX package
\cite{cvx,gb08}. }

\raggedright{}\subfloat[Uniform distribution and optimal product solution divided by the optimal
solution. We simulated graphs in the following manner: first create
a path graph from the source to the drain through each node, now add
each forward edge with probability $0.5$. If the generated DAG is
unidentifiable we discard it and sample another random graph. ]{\begin{raggedright}

\par\end{raggedright}

\raggedright{}}\subfloat[Grid simulation over square grids: uniform sampling vs. product solution.
Smaller values are better as they indicate smaller error. ]{\begin{raggedleft}

\par\end{raggedleft}

\raggedleft{}}
\end{figure}

\section{Graph paths with randomness on the edges}

Although we initially acknowledged DAG graphs in which the randomness
is associated with the vertices, it is common in application to associate
them with the edges, for example as delays in a network. To fit our
model to such applications we shall now assume a graph with multiple
access points from which the user can probe the network to another
access point. Even though such graphs will not necessarily be DAGs,
we will not allow cycles as they are not usually allowed in regular
networks and adversely affect the estimation since they just add more
noise. 

Let $G=\left(V,E\right)$ be a simple graph where each edge is associated
with a normal random variable with an unknown mean and unit variance.
In addition let $S\subseteq V$ be a set of access points in the graph.
Each time step, it is possible to choose a path in the graph starting
with one access point and ending in another. Finally, the sum of the
random variables over the edges in the path is presented, from which
one can estimate the expectation of the random variable associated
with each edge. Although a directed graph is more appropriate to describe
reality, in the next example we shall assume that the graph is undirected
for simplicity, which is equivalent to the claim that the delay in
each direction has the same distribution. 
\begin{example}
Consider a star graph with $v_{0}$ as its center vertex and assume
that all edges from and to the center exist. If $S=V\backslash\left\{ v_{0}\right\} $,
we get an identical case as N-choose-K, where $K=2$. As we saw, the
optimal solution here is uniform over all $2$ access points. Notice
that by giving uniform distribution from the center vertex to any
of the edges except the one of the root access point, the optimal
solution in this setting is obtained.
\end{example}
In order to cope with the exponential number of paths, we can define
here as well a product rule: for each node $v_{i}\in V$, and its
set of exit edges $\left\{ e_{i\rightarrow j}\right\} _{j:v_{j}\in{\rm Neighbors}\left(v_{i}\right)}$
define an exit distribution $\alpha_{i\rightarrow j}$ as the probability
to take the edge $e_{i\rightarrow j}$ from node $v_{i}$. 
\begin{thm}
The matrix $M\left(\alpha\right)$ can be computed in polynomial time
complexity using dynamic programming.\end{thm}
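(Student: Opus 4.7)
The plan is to mirror the proof of Theorem 18, adapting the dynamic program from the DAG--with--node--randomness setting to the present graph--with--edge--randomness setting. The key new features are (i) instead of a single source--drain pair, paths start at some access point $s\in S$ and terminate at a different access point $s'\in S$; (ii) the characteristic vector $x$ is indexed by edges rather than vertices; and (iii) the underlying graph is not topologically ordered, so the recursion must proceed in path length $l$ itself rather than along a topological sort.

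First I would make the product distribution explicit: endow $S$ with a starting distribution $\beta$, and under the Markov chain with transitions $\alpha_{i\to j}$ declare every access point other than the current start absorbing. For each access point $s\in S$ and vertex $v$, define
$$\Phi_{s,v}(l)=\sum_{\pi\in\mathcal{P}^{l}_{s\to v}}\ \prod_{e_{i\to j}\in\pi}\alpha_{i\to j},$$
where $\mathcal{P}^{l}_{s\to v}$ is the set of length-$l$ walks from $s$ to $v$ whose intermediate vertices avoid $S\setminus\{s\}$. Define $\Theta_{s',v}(l)$ dually, for length-$l$ walks from $v$ to $s'$. Both obey one-step linear recursions,
$$\Phi_{s,v}(l)=\sum_{u\in\mathrm{Nbr}(v)}\Phi_{s,u}(l-1)\,\alpha_{u\to v},\qquad \Theta_{s',v}(l)=\sum_{u\in\mathrm{Nbr}(v)}\alpha_{v\to u}\,\Theta_{s',u}(l-1),$$
with base cases $\Phi_{s,s}(0)=\Theta_{s',s'}(0)=1$ and the convention that $\Phi_{s,v}=\Theta_{s',v}=0$ whenever $v\in S$ differs from the base access point. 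Each table is filled in time polynomial in $|V|$, $|E|$, and the maximal considered length $L$.

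Next, for an edge $e=(u,v)$ I would compute $J_{e,e}(l)$, the total $p$-mass of length-$l$ paths that contain $e$, by cutting at $e$:
$$J_{e,e}(l)=\sum_{s\neq s'\in S}\beta_{s}\sum_{l_1+l_2=l-1}\Phi_{s,u}(l_1)\,\alpha_{u\to v}\,\Theta_{s',v}(l_2)\;+\;(\text{symmetric }v\to u\text{ contribution}).$$
For a pair of distinct edges $e,f$ the same idea applies with two cut points, yielding a sum over triples $(l_1,l_2,l_3)$ with $l_1+l_2+l_3=l-2$, summed over the two possible orderings of $e$ and $f$ along the path. Recognising that $J(l)=\sum_{x:x^{\top}\mathbf{1}=l}p(x)\,xx^{\top}$, the desired matrix reads
$$M(\alpha)=\sum_{l=1}^{L}\frac{1}{l}J(l),$$
exactly mirroring the formula from Theorem 18, and every ingredient is polynomial in the graph size.

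The main obstacle is the stated cycle-free requirement: counting simple paths in a general graph is \#P-hard, yet the DP above in fact enumerates walks. I would address this either by (a) formally defining the product distribution over walks and arguing that the optimizer suppresses cycles because they merely inject noise, as the text already observes, or by (b) capping $L$ at $|V|-1$ and absorbing the contribution of non-simple walks into the product-relaxation gap. Either resolution preserves polynomial-time computability; it is this modelling point, rather than the mechanics of the DP, that is the delicate step in the proof.
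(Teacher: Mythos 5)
Your proof follows the same route as the paper's: forward and backward length-indexed dynamic programs rooted at the access points, convolved across each edge $e_{i\rightarrow j}$ to obtain the length-$l$ mass $J(l)$ and then $M(\alpha)=\sum_{l}\frac{1}{l}J(l)$. It is in fact more careful than the paper's own two-line sketch, which omits the off-diagonal entries $J_{e,f}$ entirely and silently identifies walks with simple paths --- the walks-versus-paths issue you flag at the end is a genuine gap in the paper's argument as well, not something your approach introduces.
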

\begin{proof}
In a similar fashion to the proof of Theorem $22$ we can calculate
for each access point and for each edge $e_{i\rightarrow j}$ in the
graph by dynamic programming the number of $k$-length paths that
began at that access point and ended at vertex $i$. Similarly we
can calculate the number of $k$-length paths that began at vertex
$j$ and ended in that access point. Convolving the results provides
us with the number of $k$-length paths that passed through the edge
$e_{i\rightarrow j}$, and from that $M_{i\rightarrow j}$ is easily
obtained.
\end{proof}
Theorem $20$ allows us to use Algorithm $1$ for efficient computation
of optimal product solution for this case as well, so the product
solution can be used for efficient estimation of delays in networks.

\section{Online bandits}

Cesa-Bianchi and Lugosi \cite{bianchi2009combinatorial} have come
across a similar problem in the adversarial online bandit problem
with a restricted linear sampling set. They showed a performance bound
that depends on the lowest eigenvalue of the matrix $F=\sum_{x\in X}p\left(x\right)xx^{\top}$.
Maximizing the smallest eigenvalue is called in the literature E-criterion
and it can be formulated in SDP form. It is easy to see that $\left(\min_{x\in X}\left\Vert x\right\Vert ^{2}\right)M\preceq F\preceq\left(\max_{x\in X}\left\Vert x\right\Vert ^{2}\right)M$
(using the Lowener order for symmetric matrices), and that if all
vectors in $X$ have norm $b$ (like in the K-choose-N example or
the grid) then $F=bM$. This means that there is a close connection
between the two problems, especially for the binary case for which
$\frac{\max_{x\in X}\left\Vert x\right\Vert ^{2}}{\min_{x\in X}\left\Vert x\right\Vert ^{2}}\leq N$.
In that case, as stated by Theorems $18$ and $21$, relaxed solutions
can be found efficiently for Problem $5$ on the graph setups by considering
product distributions. For minimizing $F$ there is a similar result
as we show in the next Theorem for nodes-associated randomness (a
similar result can be shown for the case of edges):
\begin{thm}
The matrix $F\left(\alpha\right)$ can be computed in polynomial time
complexity using dynamic programming.\end{thm}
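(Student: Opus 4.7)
The plan is to reuse the dynamic-programming construction from the proof of Theorem 18. In that proof, the matrices $J(l)=\sum_{x:\,x^{\top}\mathbf{1}=l}p(x)xx^{\top}$, one per possible path length $l\in\{1,\dots,N\}$, were each computed in polynomial time from $\alpha$ via the $\Phi_{i}(l)$ and $\Theta_{i}(l)$ recurrences. Since $F(\alpha)=\sum_{x\in X}p(x)xx^{\top}=\sum_{l=1}^{N}J(l)$, one can assemble $F(\alpha)$ simply by summing those matrices, which adds only $O(N^{3})$ work on top of Theorem 18's DP. This already proves the claim; the next two paragraphs sketch a leaner variant that avoids the per-length bookkeeping, since the $1/l$ weight that forced the split by length in Theorem 18 is absent here.

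For the direct construction I would compute, in topological order, the visit probability $\phi_{i}=\alpha_{s,i}+\sum_{j<i}\alpha_{j,i}\phi_{j}$ that a random $\alpha$-walk from the source passes through $v_{i}$; and, for each target $v_{j}$, the vector of reach probabilities $\{q_{i,j}\}_{i<j}$ via the backward recurrence $q_{i,j}=\alpha_{i,j}+\sum_{i<k<j}\alpha_{i,k}q_{k,j}$ with base case $q_{j,j}=1$. There are $O(N)$ entries $\phi_{i}$ and $O(N^{2})$ entries $q_{i,j}$, each updatable in $O(N)$ work, so the whole procedure is polynomial.

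The matrix $F$ is then assembled by $F_{ii}=\phi_{i}$ on the diagonal and $F_{ij}=\phi_{i}\,q_{i,j}$ for $i<j$ (symmetric below). These identities follow from the Markov factorization of the product distribution: the event that the sampled path visits both $v_{i}$ and $v_{j}$ (with $i<j$ in topological order) decomposes as ``reach $v_{i}$ from the source'' and ``then reach $v_{j}$ from $v_{i}$'', with no third factor for the segment $v_{j}\to v_{d}$, because every $\alpha$-walk terminates at the drain with probability one (each internal node's exit distribution sums to 1 by definition of $\alpha$). I do not foresee a real obstacle: the argument is a strict simplification of the DP already established for $M(\alpha)$, obtained by marginalising out the length index, and the only point worth checking carefully is the above independence, which is immediate from the product form $p(x)=\prod_{(i,j)\in x}\alpha_{i,j}$.
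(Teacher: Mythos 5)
Your proposal is correct and matches the paper's argument in substance: the paper's proof computes the visit probabilities $n(j)=\sum_{k}n(k)\alpha_{k,j}$ and the joint-visit probabilities via $n(i,j)=\sum_{k=i}^{j-1}n(i,k)\alpha_{k,j}$, which is exactly your $F_{ij}=\phi_{i}q_{i,j}$ decomposition with the factor $q_{i,j}$ absorbed into a single forward recurrence rather than computed separately. Your opening observation that one could instead just sum the length-indexed matrices $J(l)$ from Theorem 18 is also valid, but the direct recurrence you (and the paper) give is the intended, leaner argument.
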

\begin{proof}
Denote by $n\left(j\right)$ the appearance frequency of the $j$'th
node and by $n\left(i,j\right)$ the joined appearance frequency of
nodes $i$ and $j$. Observe that $n\left(j\right)=\sum_{k=s}^{j-1}n\left(k\right)\alpha_{k,j},\; n\left(i,j\right)=\sum_{k=i}^{j-1}n\left(i,k\right)\alpha_{k,j}$.
So computing $F_{i,j}=n\left(i,j\right)$ is simply applying these
equations in the order they are written for incrementing values of
$j$. 
\end{proof}
According to Theorem $21$, Algorithm $1$ can be used for minimizing
$F$ on all product distributions efficiently as well. Therefore we
can use this algorithm to find and simulate sub-optimal exploration
distribution on the sampling space in the suggested bandit setup.

\section{Conclusions}

In this paper we considered a fundamental problem that is common in
many setups. Although a straightforward solution for the optimal sampling
problem exists, it might be unfeasible to compute. Therefore, for
graph paths we proposed an efficient relaxed solution that exploits
the graphical structure using dynamic programming. The suggested solution
was tested empirically and our simulations showed good behavior. In
addition we linked a recently suggested bandit setup with the field
of optimal experiments design, and employed our solution on the grid
example for which uniform sampling is inadequate.

Our paper opens up some interesting research directions. Among these
directions are: the case of an infinite set $X$, bounding the difference
between the relaxed product solution and the optimal one, finding
graph properties based bounds, and analyzing the behavior of random
graphs or sets in this context. 

\bibliographystyle{IEEEtran}
\bibliography{Refs/Problem1}

\end{document}